\documentclass{article}
\usepackage{arxiv}

\hyphenation{}

\usepackage{natbib}
\usepackage{multicol}

\usepackage{url}
\usepackage{amsthm}
\usepackage[utf8]{inputenc}
\usepackage{xcolor}  

\usepackage{mdframed} 
\usepackage{lipsum}  
\newmdenv[%
leftmargin = 0,%
rightmargin = 0,%
backgroundcolor = green,%
innertopmargin = \topskip,%
splittopskip = \topskip,%
skipabove =\baselineskip,%
skipbelow = \baselineskip]
{RMnotes}
\usepackage{comment}
\specialcomment{notes}{\begin{RMnotes}}{\end{RMnotes}}




\usepackage{MnSymbol}
\usepackage{mathdots}
\usepackage{mathtools} 

\usepackage{tensor}

\usepackage{amsfonts}

\usepackage{graphicx}
\usepackage{ifpdf}
\ifpdf
\usepackage{epstopdf}
\epstopdfsetup{update,prepend}
\PrependGraphicsExtensions{.svg}
\DeclareGraphicsRule{.svg}{pdf}{.pdf}{
  `inkscape -z -D #1 --export-pdf=\noexpand\OutputFile
}
\fi


\newtheorem{theorem}{Theorem}[section]
\newtheorem{lemma}[theorem]{Lemma}

\newtheorem{proposition}[theorem]{Proposition}

\newtheorem{remark}[theorem]{Remark} 


\newcommand{\R}{\mathbb{R}}


\newcommand{\SO}{\mathbf{SO}}

\newcommand{\SE}{\mathbf{SE}}

\newcommand{\so}{\mathfrak{so}}
\newcommand{\se}{\mathfrak{se}}

\newcommand{\gothr}{\mathfrak{r}}


\newcommand{\Sph}{\mathrm{S}}

\newcommand{\calM}{\mathcal{M}}
\newcommand{\calN}{\mathcal{N}}



\newcommand{\totT}{\mathcal{T}}



\newcommand{\vecV}{\mathbb{V}}












\DeclareMathOperator{\Ad}{Ad}

\newcommand{\id}{\mathrm{id}} 

\newcommand{\Lyap}{\mathcal{L}} 





\def\frameZero{\mbox{$\{0\}$}}


\newcommand{\mr}{\mathring} 

\usepackage{accents}
\makeatletter
\newcommand{\scirc}{%
    \hbox{\fontfamily{lmr}\fontsize{.4\dimexpr(\f@size pt)}{0}\selectfont{\raisebox{-0.34ex}[0ex][-0.34ex]{$\circ$}}}}

\makeatother

\mathchardef\mhyphen="2D



\newcommand{\td}{\mathrm{d}}
\newcommand{\tD}{\mathrm{D}}

\newcommand{\ddt}{\frac{\td}{\td t}}


\providecommand{\etal}{\textit{et al.}~}

\usepackage{caption}
\usepackage{subcaption}
\renewcommand{\mr}[1]{#1^\circ}

\usepackage{hyperref}
\newcommand{\VSLAM}{\mathbf{VSLAM}}
\newcommand{\MR}{\mathbf{MR}}
\newcommand{\vslam}{\mathfrak{vslam}}
\newcommand{\gothmr}{\mathfrak{mr}}

\begin{document}

\newcommand{\publicationdetails}
{van Goor, P., Mahony, R., Hamel, T., Trumpf, J. (2020). An Observer Design for Visual Simultaneous Localisation and Mapping with Output Equivariance. In \emph{Proceedings of 21st IFAC World Congress 2020}.}
\newcommand{\publicationversion}
{Author accepted version}


\title{An Observer Design for Visual Simultaneous Localisation and Mapping with Output Equivariance}
\headertitle{An Observer Design for Visual SLAM with Output Equivariance}

\author{
\href{https://orcid.org/0000-0003-4391-7014}{\includegraphics[scale=0.06]{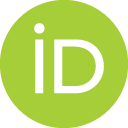}\hspace{1mm}
Pieter van Goor} \\
Department of Electrical, Energy and Materials Engineering\\
Australian National University\\
  ACT, 2601, Australia \\
\texttt{Pieter.vanGoor@anu.edu.au}
\\ \And 
\href{https://orcid.org/0000-0002-7803-2868}{\includegraphics[scale=0.06]{orcid.png}\hspace{1mm}
Robert Mahony} \\
Department of Electrical, Energy and Materials Engineering\\
Australian National University\\
  ACT, 2601, Australia \\
\texttt{Robert.Mahony@anu.edu.au}
\\ \And	
\href{https://orcid.org/0000-0002-7779-1264}{\includegraphics[scale=0.06]{orcid.png}\hspace{1mm}
Tarek Hamel} \\
I3S (University C\^ote d'Azur, CNRS, Sophia Antipolis)\\
and Insitut Universitaire de France\\
\texttt{THamel@i3s.unice.fr}
\\ \And	
\href{https://orcid.org/0000-0002-5881-1063}{\includegraphics[scale=0.06]{orcid.png}\hspace{1mm}
Jochen Trumpf} \\
Department of Electrical, Energy and Materials Engineering\\
Australian National University\\
  ACT, 2601, Australia \\
\texttt{Jochen.Trumpf@anu.edu.au}
}



\maketitle

\begin{abstract}
Visual Simultaneous Localisation and Mapping (VSLAM) is a key enabling technology for small embedded robotic systems such as aerial vehicles.
Recent advances in equivariant filter and observer design offer the potential of a new generation of highly robust algorithms with low memory and computation requirements for embedded system applications.
This paper studies observer design on the symmetry group proposed in \citep{2019_vangoor_cdc_vslam}, in the case where inverse depth measurements are available.
Exploiting this symmetry leads to a simple fully non-linear
gradient based observer with almost global asymptotic and local exponential stability properties.
Simulation experiments verify the observer design, and demonstrate that the proposed observer achieves similar accuracy to the widely used Extended Kalman Filter with significant gains in processing time (linear verses quadratic bounds with respect to number of landmarks) and qualitative improvements in robustness.
\end{abstract}



\section{Introduction}\label{sec:intro}

Visual Simultaneous Localisation and Mapping (VSLAM) and the closely related Visual Odometry (VO) are established topics in the robotics community \citep{2012_Kaess_IJRR,2015_Leutenegger_ijrr,2016_Faessler_JFR,2017_ForsterScaramuzza_tro,2017_ForsterDelaert_tro}.
They are key components of almost all aerial robotic systems \citep{2018_Delmerico_icra} and are used in a host of other robotic applications \citep{2008_Bonin-Font_JIRS} including autonomous driving and underwater robotics.
VSLAM is used to refer to the case of the general SLAM problem where the available measurements are the bearings of landmarks such as provided by image features obtained using a monocular camera.
Visual Odometry is a variant of the VSLAM problem where the solution is optimised for local consistency of the localisation of the system and updates landmark states only for currently visible landmarks.
While the VO community has focused on embedded systems applications, and places a premium on algorithms with low computational and memory requirements \citep{2018_Delmerico_icra}, the VSLAM community has placed a premium on large scale map building, loop closure and accuracy \citep{2016_Stachniss_Handbook,2016_Cadena_TRO}.
As a consequence, many state-of-the-art VO systems use filter based formulations \citep{2007_Mourikis_icra,2015_Bloesch_iros,2017_ForsterScaramuzza_tro,2014_Lynen_iros} in contrast to the full trajectory smoothing and graph based optimization formulation accepted as the community standard for SLAM problems \citep{2016_Cadena_TRO}.
Well engineered trajectory smoothing algorithms using short sliding-windows are still highly competitive algorithms for VO
\citep{2012_Kaess_IJRR,2015_Leutenegger_ijrr,2017_Qin_arxive,2017_ForsterDelaert_tro}.

The non-linear observer community has become interested in the visual SLAM and VO problem in the last few years.
Work by Guerrerio \etal \citep{2013_Guerreiro_TRO} and Louren{\c{c}}o \etal \citep{2016_LouGueBatOliSil} propose a non-linear observer for the ``robo-centric'' SLAM problem.
Recent work by Barrau \etal \citep{2016_Barrau_arxive,2017_Barrau_tac} introduce a symmetry group $\SE_{n+1}(3)$ for the SLAM problem and use this to derive an Invariant Kalman Filter algorithm that overcomes consistency issues that have plagued the EKF algorithms from the classical SLAM era \citep{2011_Dissanayake_ICIIS,2017_Zhang_ral}.
Parallel work by Mahony \etal 
\citep{2017_Mahony_cdc} show that this symmetry acts transitively on a principle fibre bundle $\calM_n(3)$ which forms a natural geometric state-space for the SLAM problem, overcoming the gauge uncertainty present in the usual pose-map state representation.
However, the symmetry induced by the group $\SE_{n+1}(3)$ applies only to the SLAM configuration state and is not compatible with bearing measurements.
As a consequence, applying the $\SE_{n+1}(3)$ symmetry to the visual SLAM problem still requires linearisation of the output map.
A new symmetry for the VSLAM problem was proposed in \citep{2019_vangoor_cdc_vslam} along with a non-linear observer. 
However, in this prior work the observer is closely based on \citep{2016_Hamel_cdc} and is derived in local coordinates and then lifted onto the symmetry group. 
It is of interest to consider the case where the observer is designed  explicitly using the symmetry structure.


In this paper, we present a highly robust, simple, and computationally cheap non-linear observer for the visual SLAM problem based on the new symmetry of the SLAM configuration space, first presented in \citep{2019_vangoor_cdc_vslam}.
The symmetry is associated with the novel $\VSLAM_n(3)$ Lie-group, which acts on the raw pose-map configuration coordinates and is compatible with the SLAM configuration manifold $\calM_n(3)$ \citep{2017_Mahony_cdc}.
The symmetry group structure introduced allows direct application of previous work by Mahony \etal \citep{RM_2013_Mahony_nolcos} in development of non-linear observers to yield a novel observer for continuous-time VSLAM.
In the design of this observer, it is assumed measurements of the inverse depths of landmarks are available in addition to the bearing measurements.
In practice, the inverse depth may be measured by using optical flow, triangulation, or depth cameras.
The resulting algorithm is fully non-linear; no linearisation is required of the system or output maps.
The approach has the advantage that constant gains can be used in the filter (no Riccati gains need be computed on-line) leading to lower computation and memory requirements.
This additionally leads to a reduction in the number of parameters that need to be tuned in comparison with a standard EKF, making the proposed filter simpler to use in practice.
The inherent symmetry of the approach ensures high levels of robustness and Theorem \ref{th:observer} proves almost global asymptotic and local exponential stability of the error coordinates.
The convergence properties of the filter are demonstrated through a simulation experiment.
Additional simulation experiments compare an EKF with our observer.
These show that our observer achieves comparable mean RMSE to the EKF and has fewer outliers, and operates with a computational complexity that is only linear in the number of landmarks compared to quadratic complexity for the EKF. 

\section{Notation}\label{sec:notation}


The special orthogonal group is the set of rotation matrices and is denoted $\SO(3)$ with Lie algebra $\so(3)$.
The special Euclidean group is the set of rigid body transformations and is denoted $\SE(3)$ with Lie algebra $\se(3)$.
The group of positive real numbers equipped with multiplication is denoted $\MR$ with Lie algebra $\gothmr$.
We use the notation $R_P \in \SO(3)$ and $x_P \in \R^3$ to denote the rotation and translation components of a rigid-body transformation $P \in \SE(3)$ and write
\begin{equation}
P = \begin{pmatrix}
R_P & x_P \\
0 & 1 \end{pmatrix}.
\label{eq:matrix_pose_SE3}
\end{equation}

The pose of a vehicle moving in Euclidean space is written $P \in \SE(3)$.
The kinematics of such a pose frame are written as
\begin{gather}
\dot{P} = PU, \quad \dot{R}_P = R_P \Omega_U^\times, \quad \dot{x}_P = R_P V_U \label{eq:rigid_kinematics_coords}
\end{gather}
where $\Omega_U=(\Omega_1,\Omega_2,\Omega_3)^\top$ and $V_U$ are the body-fixed rotational and translational velocity vectors, respectively, and
\begin{equation}
U = (\Omega_U^\times,V_u)^{\wedge}
:=
\begin{pmatrix}
  \Omega_U^\times & V_U \\ 0 & 0
\end{pmatrix}, \quad \Omega^\times_U=
\begin{pmatrix} 0 & -\Omega_3 & \Omega_2 \\
\Omega_3 & 0 & -\Omega_1 \\
-\Omega_2 & \Omega_1 & 0
\end{pmatrix}.
\label{eq:wedge}
\end{equation}
One has that $\Omega^\times_U w = \Omega_U \times w$ for any $w \in \R^3$ where $\times$ refers to the vector (cross) product.

For a unit vector $y \in \Sph^2 \subset \R^3$, the projector is given by
\begin{align} \label{eq:projector}
  \Pi_y := I_3 - yy^\top,
\end{align}
and has the property $\Pi_y = - y^\times y^\times$.



\section{Problem formulation}\label{sec:problem_formulation}

The \emph{total space} coordinates for the SLAM problem are defined with respect to an unknown fixed but arbitrary reference $\frameZero$.
Let $P \in \SE(3)$ represent the body-fixed frame coordinates of the robot with respect to this reference frame.
Let
\[
p_i \in \R^3, \quad\quad i = 1, \ldots, n,
\]
be sparse points in the environment expressed with respect to the reference frame $\frameZero$.
The \emph{total space} of the SLAM problem is the product space
\begin{align}
  \totT_n(3) = \SE(3) \times \R^3 \times \cdots \times \R^3
  \label{eq:tot_T}
\end{align}
made up of these raw coordinates $\Xi := (P, p_1, \ldots, p_n)$.
The bearing of a point $p_i$ co-located with the robot pose centre $x_P$ is undefined, so the VSLAM problem can only be considered on the \emph{reduced total space}
\begin{align}
  \mr{\totT}_n(3) = \left\{ (P, p_i) \in \totT_n(3) \; \vline \; (\forall i) \; p_i \neq x_P \right\}.
  \label{eq:reduced_total_space}
\end{align}
Moreover, since all the measurements of the VSLAM problem considered are made in the body-fixed frame the solution is only well defined up to an $\SE(3)$ gauge transformation \citep{2001_Kanatani_TIT}.
This property can be expressed as an invariance of the problem formulation and leads to the quotient structure of the SLAM manifold proposed in \citep{2017_Mahony_cdc}.
To keep the derivation simple and more accessible, in the present paper we will define the group actions and derive the observer on the reduced total space.

The measurements considered are spherical coordinates $y_i$ of body-fixed frame observations of points in the environment, which in practice may be obtained from a calibrated monocular camera.
Additionally, in this analysis, we assume inverse depth estimates $z_i$ are also available.
That is, for a given robot pose $P$ and environment point $p_i$,
\begin{subequations}\label{eq:h_output}
  \begin{align}
y_i & := \frac{R_P^\top(p_i-x_P)}{|p_i-x_P|},  \label{eq:h_output_y} \\
z_i & :=  |p_i - x_P|^{-1}.  \label{eq:h_output_z}
\end{align}
\end{subequations}
The combined output space is $\calN_n(3) = (S^2 \times \R^+) \times \cdots \times (S^2 \times \R^+)$ and we write $(y, z) = h(\Xi) = (h_1(\Xi),\dots,h_n(\Xi)) = ((y_1, z_1), \dots, (y_n, z_n))$, where appropriate.

Let $U \in \se(3)$ denote the velocity of the robot.
Assume that the environment points $p_i$ being observed are static, and thus do not have a velocity.
The tangent space of $\totT_n(3)$ at a point $\Xi = (P, p_1, \ldots, p_n)$ can be identified with the matrix subspace
\[
  (PU, u_1,...,u_n), \quad U \in \se(3), \quad u_1,...,u_n \in \R^3.
\]
The system kinematics can then be written as
\begin{equation}
\ddt ( P , p_1, \ldots, p_n )  = ( P U, 0, \ldots, 0 ).
\label{eq:tot_kinematics}
\end{equation}

We assume that the robot velocity $U \in \se(3)$ is measured. 
We will also measure the optical flow of each landmark
\[
\phi_i \in T_{y_i} \Sph^2 \subset \R^3
\]
by numerically differentiating the coordinates of $y_i$ between consecutive measurements.
Here, we express $\phi_i \in T_{y_i} \Sph^2$ using the coordinates obtained by embedding $\Sph^2 \subset \R^3$.
Define a measurement velocity space
\begin{align}
\vecV = \se(3) \times \R^3 \times \cdots \times \R^3
\label{eq:vecV}
\end{align}
with elements $(U, \phi_1, \ldots, \phi_n)$.



\section{Symmetry of the VSLAM problem}\label{sec:symmetry}

The symmetry group of the VSLAM for $n$ landmarks in Euclidean 3-space with separate bearing and range measurements problem is the \emph{visual SLAM group} $\VSLAM_n(3)$ first described in \citep{2019_vangoor_cdc_vslam}.
However, in this paper the VSLAM group and its actions are presented in a different form to \citep{2019_vangoor_cdc_vslam}.
%
%
%

In the following, we will write
$(Q_A,a)_1$ instead of the more formal $({\left(Q_A\right)}_1,a_1)$ and sometimes write $(Q_A,a)_i$ to represent the tuple $(Q_A,a)_1,\dots,(Q_A,a)_n$.
Similarly, we will sometimes write $(P,p_i)$ instead of
$(P,p_1,\dots,p_n)$.

The VSLAM group \citep{2019_vangoor_cdc_vslam} may be written
\begin{multline*}
\VSLAM_n(3) = \{ (A,(Q_A,a)_1, \ldots,(Q_A,a)_n) \;|\; \\
 A \in \SE(3),\; {\left(Q_A\right)}_i \in \SO(3),\; a_i\in \MR, \; i = 1, \ldots, n\}.
\end{multline*}

\begin{lemma}
The set $\VSLAM_n(3)$ is a Lie group, defined as
$$ \VSLAM_n(3) := \SE(3) \times (\SO(3)\times \MR)^n.$$
\end{lemma}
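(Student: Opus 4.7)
The plan is to recognise this statement as a standard instance of the fact that a finite direct product of Lie groups carries a canonical Lie group structure, and simply verify that all the building blocks qualify.

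First I would record that $\SE(3)$, $\SO(3)$ and $\MR$ are each well known Lie groups: $\SE(3)$ as a semidirect product of $\SO(3)$ and $\R^3$ (dimension $6$), $\SO(3)$ as an embedded submanifold of $\GL(3)$ defined by $R^\top R = I$ and $\det R = 1$ (dimension $3$), and $\MR = (0,\infty)$ as a one-dimensional open subset of $\R$ with multiplication. Each carries smooth group operations (multiplication and inversion), which is routine to check.

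Next I would invoke the standard construction: if $\grpG_1,\dots,\grpG_k$ are Lie groups, then the Cartesian product $\grpG_1\times\cdots\times\grpG_k$ is a smooth manifold of dimension $\sum\dim\grpG_i$ (as a product manifold), and the componentwise operations
\[
(g_1,\dots,g_k)\cdot(h_1,\dots,h_k) := (g_1 h_1,\dots,g_k h_k), \qquad (g_1,\dots,g_k)^{-1} := (g_1^{-1},\dots,g_k^{-1})
\]
make it into a Lie group, since each component operation is smooth and the product of smooth maps is smooth. Applying this to $\SE(3)$ together with $n$ copies of $\SO(3)\times\MR$ yields a Lie group of dimension $6+4n$, whose underlying set is exactly the set of tuples $(A,(Q_A,a)_1,\ldots,(Q_A,a)_n)$ described in the excerpt.

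There is no real obstacle here; the only thing to watch is bookkeeping, namely making clear that the group operation on $\VSLAM_n(3)$ is taken to be the componentwise one inherited from the factors, so that the identity element is $(I_4,(I_3,1)_1,\ldots,(I_3,1)_n)$ and inversion sends $(A,(Q_A,a)_i)$ to $(A^{-1},(Q_A^{-1},a^{-1})_i)$. With these identifications the claim is immediate from the product-of-Lie-groups principle, and the subsequent sections of the paper can then proceed to define the nontrivial part of the theory, namely the actions of $\VSLAM_n(3)$ on the total space $\totT_n(3)$ and on the output space $\calN_n(3)$.
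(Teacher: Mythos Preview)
Your proposal is correct and matches the paper's treatment: the paper in fact states this lemma without proof, taking it as immediate from the direct-product definition, and your argument simply spells out the standard justification that a finite product of Lie groups with componentwise operations is again a Lie group.
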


The visual SLAM group acts as a symmetry group on the reduced total space $\mr{\totT}_n(3)$.

\begin{lemma}\label{lem:Upsilon_action}
The mapping $\Upsilon : \VSLAM_n(3) \times \mr{\totT}_n(3) \rightarrow \mr{\totT}_n(3)$ defined by
\begin{align}
\Upsilon((A,&(Q_A,a)_i),(P,p_i)) \notag \\
&= (PA,(a^{-1}R_{PA}Q_A^\top R_P^\top(p-x_P) +x_{PA})_i), \label{eq:Upsilon}
\end{align}
is a right group action of $\VSLAM_n(3)$ on $\mr{\totT}_n(3)$.
\end{lemma}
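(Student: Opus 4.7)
The plan is to verify the three requirements for a right group action: (i) well-definedness, i.e.\ that $\Upsilon$ actually maps into $\mr{\totT}_n(3)$; (ii) the identity axiom $\Upsilon(\id, \Xi) = \Xi$; and (iii) the right-compatibility axiom $\Upsilon(Y, \Upsilon(X, \Xi)) = \Upsilon(XY, \Xi)$. Smoothness of $\Upsilon$ is clear from the explicit rational formula, so I would only mention it briefly.

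First I would dispatch well-definedness. Let $\Xi = (P, p_i)$ with $p_i \neq x_P$ for all $i$. Denoting the output landmarks by $p'_i = a_i^{-1} R_{PA} Q_{A,i}^\top R_P^\top (p_i - x_P) + x_{PA}$, one reads off
\begin{equation*}
p'_i - x_{PA} = a_i^{-1} R_{PA} Q_{A,i}^\top R_P^\top (p_i - x_P).
\end{equation*}
Since $a_i > 0$ and each rotation matrix is invertible, $p_i \neq x_P$ implies $p'_i \neq x_{PA}$, so the image lies in $\mr{\totT}_n(3)$.

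For the identity axiom, the identity element is $\id = (I_4, (I_3, 1)_i)$; substituting $A = I_4$, $Q_{A,i} = I_3$, $a_i = 1$ into \eqref{eq:Upsilon} immediately returns $(P, p_i)$.

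The main work, and the main obstacle, is the compatibility axiom, which amounts to a moderately heavy but mechanical substitution. Writing $X = (A, (Q_A, a)_i)$ and $Y = (B, (Q_B, b)_i)$, and using the direct-product group law $XY = (AB, (Q_{A,i} Q_{B,i}, a_i b_i)_i)$, I would first apply $\Upsilon(X, \cdot)$ to $\Xi$, obtaining intermediate pose $P' = PA$ with $R_{P'} = R_{PA}$, $x_{P'} = x_{PA}$, and landmarks $p'_i = a_i^{-1} R_{PA} Q_{A,i}^\top R_P^\top(p_i - x_P) + x_{PA}$. Then applying $\Upsilon(Y, \cdot)$ gives new pose $P'B = PAB$ and new landmarks of the form $b_i^{-1} R_{P'B} Q_{B,i}^\top R_{P'}^\top(p'_i - x_{P'}) + x_{P'B}$. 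The key cancellation is $R_{P'}^\top(p'_i - x_{P'}) = a_i^{-1} Q_{A,i}^\top R_P^\top(p_i - x_P)$, after which collecting scalars and orthogonal factors yields
\begin{equation*}
(a_i b_i)^{-1} R_{PAB} (Q_{A,i} Q_{B,i})^\top R_P^\top (p_i - x_P) + x_{PAB},
\end{equation*}
which is exactly $\Upsilon(XY, \Xi)$. This confirms the right action law $\Upsilon(Y, \Upsilon(X, \Xi)) = \Upsilon(XY, \Xi)$ and completes the proof. The delicate point is keeping straight that the $\SE(3)$ factor acts on the right of $P$ (so $R_{P'} = R_{PA}$, not $R_A R_P$), which is precisely what makes the nested rotation and scaling factors telescope correctly.
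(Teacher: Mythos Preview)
The paper states this lemma without proof, so there is no argument to compare against; your direct verification of well-definedness, the identity axiom, and the right-compatibility axiom is correct and is exactly the routine computation one would expect. The telescoping step $R_{P'}^\top(p'_i - x_{P'}) = a_i^{-1} Q_{A,i}^\top R_P^\top(p_i - x_P)$ is the crux and you have it right.
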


The group action for the robot pose is rigid-body transformation.
The group action for environment points is considerably more subtle and can be understood conceptually as a sequence of operations:  firstly, the reference frame coordinates of an environment point are written in the body-fixed frame, this point is then rotated by $Q_A^\top$ and then scaled by $a^{-1}$,
before these body-fixed frame coordinates are rewritten in the inertial frame using the \emph{new body-fixed frame} reference.

\begin{figure}[ht]
\begin{centering}
\includegraphics[width = 0.8\linewidth]{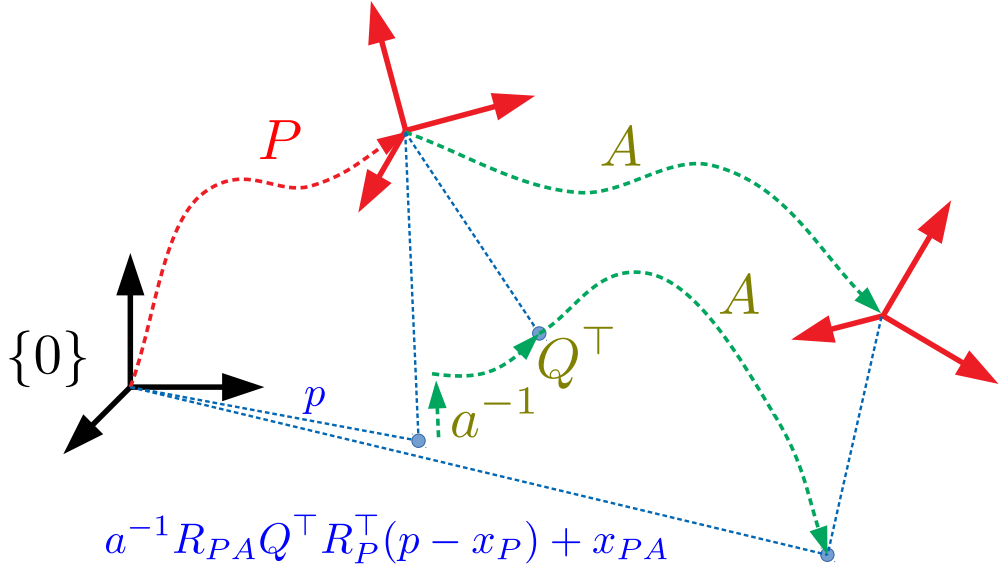}
\caption{Group action $\Upsilon((A,(Q_A,a)_i),(P,p_1, \ldots, p_n))$.
The pose $P \mapsto P A$, that is the tip point of the pose is updated by the correction $A \in \SE(3)$.
The body fixed-frame environment points $R_A^\top (p_i - x_P)$ are rotated by $Q_A^\top$ and scaled by $a^{-1}$ in the body-fixed frame before \emph{transforming with the robot pose} to a new point $p_i'$ which is rewritten in the inertial frame.
}
\label{fig:symmetry_action}
\end{centering}
\end{figure}

%

A key property of the proposed structure is that there is a compatible group operation on the output $\calN_n(3)$ of the system.

\begin{lemma}\label{lem:rho}
The action $\rho : \VSLAM_n(3) \times \calN_n(3) \rightarrow \calN_n(3)$ defined by
\begin{align}\label{eq:rho}
\rho((A,(Q_A,a)_i),(y, z)_i)& := ((Q_A^\top y, az)_i)
\end{align}
is a transitive right action on $\calN_n(3)$.
Furthermore, one has
\begin{align*}
\rho((A,(Q_A,a)_i), h(\Xi)) & = h(\Upsilon((A, (Q_A,a)_i),\Xi))
\end{align*}
where $h$ is given by \eqref{eq:h_output}.
That is, $h$ is equivariant with respect to the actions $\Upsilon$ and $\rho$.
\end{lemma}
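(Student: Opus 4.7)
The plan is to verify in turn the three separate claims in the lemma: that $\rho$ satisfies the right-action axioms, that it is transitive, and that $h$ intertwines $\Upsilon$ and $\rho$. Since $\VSLAM_n(3)$ is a direct product of Lie groups by the earlier lemma, its multiplication is componentwise, so for $g_k = (A_k, (Q_{A_k}, a_k)_i)$ one has $g_1 g_2 = (A_1 A_2, (Q_{A_1} Q_{A_2}, a_1 a_2)_i)$. Checking the identity axiom just amounts to plugging $Q_A = I_3$ and $a = 1$ into \eqref{eq:rho}; checking the right-action composition axiom reduces to
\[
((Q_{A_1} Q_{A_2})^\top y_i, a_1 a_2 z_i) = (Q_{A_2}^\top Q_{A_1}^\top y_i, a_2 a_1 z_i),
\]
which is immediate from associativity of matrix multiplication and commutativity of positive real multiplication. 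Note the order is correct for a right action.

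For transitivity, I would pick two arbitrary targets $(y_i, z_i)$ and $(y_i', z_i')$ in $\calN_n(3)$ and construct $g \in \VSLAM_n(3)$ moving one to the other. The $\SE(3)$ factor $A$ plays no role in $\rho$, so it can be taken to be $I_4$. For each index $i$, pick any $Q_{A_i} \in \SO(3)$ with $Q_{A_i}^\top y_i = y_i'$ (possible since both lie on $\Sph^2$, so a rotation carrying one to the other always exists), and set $a_i = z_i'/z_i \in \MR$, which is well-defined and positive because inverse-depths are strictly positive. This gives transitivity.

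The equivariance claim is a direct calculation on a single landmark $i$; the $n$-tuple case then follows componentwise. Writing $\tilde{p}_i = a^{-1} R_{PA} Q_A^\top R_P^\top (p_i - x_P) + x_{PA}$ for the transformed point from \eqref{eq:Upsilon}, I would first compute
\[
\tilde{p}_i - x_{PA} = a^{-1} R_{PA} Q_A^\top R_P^\top (p_i - x_P),
\]
and note that since $R_{PA} Q_A^\top R_P^\top \in \SO(3)$ preserves length and $a > 0$, the norm becomes $|\tilde{p}_i - x_{PA}| = a^{-1} |p_i - x_P|$. Thus the inverse-depth component transforms as $\tilde{z}_i = a z_i$, matching $\rho$. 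Substituting back into \eqref{eq:h_output_y} and cancelling the factor $R_{PA}^\top$ on the left and the scalar $a^{-1}$ on numerator and denominator yields $\tilde{y}_i = Q_A^\top y_i$, again matching $\rho$.

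None of the three parts is really hard, but the one most likely to trip up a reader is the equivariance computation, since the $\Upsilon$ action is deliberately designed with the rotation $R_{PA}$ on the left so that it cancels precisely against $R_{PA}^\top$ appearing in the definition of $h$ at the new pose. Getting the bookkeeping of the reference frames right, that is, noting that the body-frame rotation and scaling by $Q_A^\top$ and $a^{-1}$ happen in the body frame before being re-expressed in the inertial frame via $R_{PA}$, is the conceptual content behind the calculation and is what I would emphasise when writing it up.
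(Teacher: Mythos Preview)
Your proof is correct in all three parts: the right-action axioms, transitivity, and the equivariance calculation are each handled cleanly, and your remark about the cancellation of $R_{PA}^\top$ against the left factor $R_{PA}$ in the $\Upsilon$ action captures exactly why the definition of $\Upsilon$ is set up the way it is. The paper itself states this lemma without proof, so there is no alternative argument to compare against; your write-up fills the gap and is at the right level of detail.
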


%

\subsection{Lift of the SLAM kinematics}

A key aspect of the proposed approach is that the symmetry group $\VSLAM_n(3)$ and the reduced total space $\mr{\totT}_n(3)$ are quite different spaces.
The difference is particularly clear in studying the structure of the lifted kinematics on the $\VSLAM_n(3)$ group.

The Lie-algebra of $\VSLAM_n(3)$ is
\begin{equation*}
\vslam_n(3) = \se(3) \times (\so(3) \times \gothmr) \times \dots \times (\so(3) \times \gothmr).
\end{equation*}
We write $(U,(W,w)_i) \in \vslam_n(3)$ with $U \in \se(3)$, $W_i \in \so(3)$ and $w_i \in \gothr$, where
\begin{align*}
U=\begin{pmatrix} \Omega_U^\times & V_U \\ 0 & 0 \end{pmatrix}.
\end{align*}

In order to implement an observer on the VSLAM group, it is necessary to lift the velocity measurements $(U,\phi_i) \in \vecV$ \eqref{eq:vecV} to elements $(U,(W,u)_i) \in \vslam_n(3)$ such that the resulting group velocity is compatible with the system kinematics.
That is, an algebraic map $\lambda : \mr{\totT}_n(3) \times \vecV \rightarrow \vslam_n(3)$ is required such that
\begin{align} \label{eq:lift-condition}
\td \Upsilon_{(P,p_i)} \lambda ((P,p_i),(U,\phi_i)) = (PU, 0, \ldots, 0).
\end{align}

\begin{proposition}
The map $\lambda :\mr{\totT}_n(3) \times \vecV \rightarrow \vslam_n(3)$ defined by
\begin{align} \label{eq:lambda}
\lambda((P,p_i),(U,\dot{y}_i)) := \left(U ,\left((\phi \times y)^\times, z y^\top V_U\right)_i \right),
\end{align}
where $(y,z)_i = h((P,p_i))$ satisfies the lift condition \eqref{eq:lift-condition}.
\end{proposition}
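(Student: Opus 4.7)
The plan is to verify the lift condition \eqref{eq:lift-condition} componentwise. Take smooth curves through the identity in $\VSLAM_n(3)$ generated by $(U,(W,w)_i)\in\vslam_n(3)$, namely $A(t)=\exp(tU)$, $Q_{A}(t)=\exp(tW)$, $a(t)=\exp(tw)$, and differentiate $\Upsilon((A(t),(Q_A(t),a(t))_i),(P,p_i))$ at $t=0$. The pose component is immediate: since $\Upsilon$ acts on $P$ by right multiplication $P\mapsto PA$, its derivative at $A=I$ in direction $U$ is $PU$, matching the first slot of \eqref{eq:tot_kinematics}. The work is therefore to show that the landmark slots vanish when $(W_i,w_i)=((\phi_i\times y_i)^\times,\,z_i y_i^\top V_U)$.

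Differentiating the landmark expression $p_i'(t)=a(t)^{-1}R_{PA(t)}Q_A(t)^\top R_P^\top(p_i-x_P)+x_{PA(t)}$ at $t=0$, using $\dot R_{PA}\vert_0=R_P\Omega_U^\times$, $\dot x_{PA}\vert_0=R_PV_U$, $\dot{(a^{-1})}\vert_0=-w_i$, $\dot Q_A^\top\vert_0=-W_i$, and then multiplying through by $R_P^\top$, I would arrive at the single equation
\begin{equation*}
W_i q_i+w_i q_i=\Omega_U^\times q_i+V_U,\qquad q_i:=R_P^\top(p_i-x_P)=y_i/z_i,
\end{equation*}
whose solvability for $(W_i,w_i)$ is exactly the lift condition for the $i$-th landmark.

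Next I would decompose this equation into components parallel and perpendicular to $y_i$. Taking the $y_i^\top$-inner product and using the skew-symmetry of $W_i$ together with $y_i^\top(\Omega_U\times y_i)=0$ leaves $w_i/z_i=y_i^\top V_U$, giving $w_i=z_iy_i^\top V_U$ as proposed. Applying the projector $\Pi_{y_i}$ and noting $\Pi_{y_i}q_i=0$, $\Pi_{y_i}\Omega_U^\times y_i=\Omega_U^\times y_i$, and $\Pi_{y_i}W_iq_i=W_iq_i$ reduces the perpendicular part to $W_iy_i=\Omega_U^\times y_i+z_i\Pi_{y_i}V_U$. A short computation from the system kinematics \eqref{eq:rigid_kinematics_coords} for $\dot y_i$ (differentiating $y_i=R_P^\top(p_i-x_P)/|p_i-x_P|$ with $\dot p_i=0$) yields $\phi_i=\dot y_i=-\Omega_U^\times y_i-z_i\Pi_{y_i}V_U$, so the required identity becomes $W_iy_i=-\phi_i$. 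Finally, for $W_i=(\phi_i\times y_i)^\times$ the BAC--CAB identity together with $y_i^\top\phi_i=0$ and $|y_i|=1$ gives $(\phi_i\times y_i)\times y_i=y_i(\phi_i^\top y_i)-\phi_i|y_i|^2=-\phi_i$, closing the verification.

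The only genuinely non-routine step is recognising that the perpendicular equation for $W_i$ is exactly solved by $(\phi_i\times y_i)^\times$; this relies on matching the system-side formula for $\dot y_i$ against the group-side right-hand side, so the interplay between the kinematics of the bearing $y_i$ and the decomposition of $V_U$ into its $y_i$ and $\Pi_{y_i}$ components is the main place care is required. The remaining algebraic manipulations are straightforward.
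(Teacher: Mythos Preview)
Your proof is correct and is essentially the same computation as the paper's: both differentiate $\Upsilon$ at the identity, reduce by $R_P^\top$, and use the optical-flow identity $\phi_i=-\Omega_U^\times y_i-z_i\Pi_{y_i}V_U$ to close the landmark component. The only difference is organisational: the paper substitutes the proposed $(\lambda_Q,\lambda_a)$ directly into the derivative and simplifies the resulting expression to zero, whereas you first write the undetermined lift equation $W_iq_i+w_iq_i=\Omega_U^\times q_i+V_U$ and then split it along $y_i$ and $\Pi_{y_i}$ to read off $w_i$ and the constraint $W_iy_i=-\phi_i$ separately. Your decomposition makes it transparent \emph{why} the formula takes the form it does (the $y_i$-component fixes $w_i$, the $\Pi_{y_i}$-component fixes $W_i$ up to the $y_i^\times$ kernel direction), while the paper's direct substitution is marginally shorter but less illuminating about the origin of the lift.
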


\begin{proof}
Under the static landmark assumption $p_i = 0$, the optic flow $\phi_i = \dot{y_i}$ is given by
\begin{align} \label{eq:optical-flow}
\phi_i &= -\Omega_U^\times y_i - z_i (I-y_i y_i^\top) V_U.
\end{align}
Let $(\lambda_A, (\lambda_Q^\times, \lambda_a)_i) := \lambda((P,p_i),(U,\phi_i))$.
Evaluating the left-hand side of \eqref{eq:lift-condition}, one has
\begin{align*}
\td &\Upsilon_{(P,p_i)} \lambda ((P,p_i),(U,\phi_i)) \\
&= \tD \Upsilon_{(P,p_i)}(\id) [ (\lambda_A, (\lambda_Q^\times, \lambda_a)_i) ], \\
&= (P\lambda_A, (- \lambda_a (p-x_P) + R_P \Omega_{\lambda_A}^\times R_P^\top (p - x_P) \\
&\hspace{2cm} - R_P \lambda_Q^\times R_P^\top (p - x_P) + R_P V_U )_i).
\end{align*}
This expression may be written in terms of $(y,z)_i$ as follows.
\begin{align*}
\td &\Upsilon_{(P,p_i)} \lambda ((P,p_i),(U,\phi_i)) \\
&= (P\lambda_P, (- \lambda_a R_P z^{-1} y + R_P (\Omega_{\lambda_A}^\times - \lambda_Q^\times) z^{-1} y + R_P V_U )_i).
\end{align*}
Multiply the landmark velocity terms by $R_P^\top$ and substitute in the values for $\lambda$ to obtain
\begin{align*}
- \lambda_a & z^{-1} y + (\Omega_{\lambda_A}^\times - \lambda_Q^\times) z^{-1} y + V_U \\
&= -yy^\top V + z^{-1} (\Omega_U - (\phi \times y)^\times) y + V_U, \\
&= z^{-1} \Omega_U^\times y - z^{-1} y^\times y^\times \phi + (I-yy^\top) V , \\
&= z^{-1} \Omega_U^\times y - z^{-1} (\Omega_U^\times y_i + z_i (I-y_i y_i^\top) V_U) + (I-yy^\top) V , \\
&= 0.
\end{align*}
Hence $\td \Upsilon_{(P,p_i)} \lambda ((P,p_i),(U,\phi_i)) = (P\lambda_A, R_P 0) = (PU, 0)$ as required.
\end{proof}

The lifted velocity $(U,(W,u)_i) = \lambda((P,p_i), (U, \phi_i))$ induces kinematics on the symmetry group that project down to the state space trajectory.
Since the group action is not free, the stabiliser of $\Upsilon$ is non-trivial, and there are directions in $\vslam_n(3)$, in particular $y_i^\times \in \so(3)$, that are not constrained by the lift requirement \eqref{eq:lift-condition}.
The lift in direction $y_i^\times \in \so(3)$ is chosen to be zero without loss of generality.

The lift $\lambda$ will enable us to go on and apply the observer design methodology developed in \citep{RM_2013_Mahony_nolcos}. 

\section{Observer design}\label{sec:observer}

We approach the observer design by considering the lifted kinematics of the system on the symmetry group and designing the observer on $\VSLAM_n(3)$.
Let $\Xi(t) = (P(t),p_i(t)) \in \totT_n(3) $ be the `true' configuration of the SLAM problem, noting that $\Xi(t)$ is defined relative to some arbitrary reference $\frameZero$.
Let $X(t) = (A(t),(Q(t) ,a(t))_i) \in \VSLAM_n(3)$ and define the \emph{lifted kinematics}  \citep{RM_2013_Mahony_nolcos}
\begin{align}
\ddt X(t)&= X \lambda(\Upsilon(X, \Xi(0)), (U, \phi_i)) \notag \\
\quad X(0) &= \id.
\label{eq:ddtX}
\end{align}
Equation \eqref{eq:ddtX} evolves on the VSLAM group where $(U, \phi_i) \in \vecV$ are the measured velocities and $\lambda$ is the lift function \eqref{eq:lambda}.

Choose an arbitrary origin configuration
\[
\mr{\Xi}= (\mr{P}, \mr{p}_i) \in \totT_n(3).
\]
If the initial condition $X(0) \in \VSLAM_n(3)$ of the lifted kinematics satisfies $\Upsilon(X(0),\mr{\Xi}) = \Xi(0)$ then \eqref{eq:ddtX} induces a trajectory that satisfies
\[
\Xi(t) = \Upsilon( X(t) , \mr{\Xi}) \in \totT_n(3)
\]
for all time \citep{RM_2013_Mahony_nolcos}.

Let the observer be defined as
\[
\hat{X} = (\hat{A},(\hat{Q},\hat{a})_i) \in \VSLAM_n(3).
\]
The lifted kinematics \eqref{eq:ddtX} provide the internal model for the observer design.
That is, the kinematics of the observer are given by
\begin{align}
\ddt \hat{X}(t)&= \hat{X} \lambda(\Upsilon(X, \mr{\Xi}), (U, \phi_i))  - \Delta \hat{X} \notag, \\
\hat{X}(0) &= \id,
\label{eq:ObserXi}
\end{align}
where $\Delta \in \vslam_n(3)$ is an innovation term to be assigned.
Note that $\lambda(\Upsilon(X, \mr{\Xi}), (U, \phi_i))$ is shown in \eqref{eq:lambda} to depend only on the measured quantities $y_i, z_i, U, \phi_i$, and therefore can be implemented in the observer kinematics \eqref{eq:ObserXi}.
The configuration estimate generated by the observer is given by
\[
\hat{\Xi} = (\hat{P}, \hat{p}_i)
= \Upsilon( \hat{X}, \mr{\Xi}) \in \mr{\totT}_n(3)
\]
given the reference $\mr{\Xi} \in \totT_n(3)$.

\begin{theorem}\label{th:observer}
Consider the kinematics \eqref{eq:ddtX} evolving on $\VSLAM_n(3)$ along with bounded outputs ${\bf y}=\{(y,z)_i\}=h(\Xi(t)) \in \calN_n(3)$ given by \eqref{eq:h_output}.
Fix an arbitrary origin configuration $\mr{\Xi}= (\mr{P}, \mr{p}_i)$ and define the output error ${\bf e}=\{(e_y,e_z)_i\}$ as
\begin{align}
{\bf e}=\rho(\hat{X}^{-1},\{(y,z)_i\})=\rho(E^{-1}, \{(\mr{y},\mr{z})_i\}) \in \calN_n(3). \label{e}
\end{align}
where $\{(\mr{y},\mr{z})_i\} = h(\mr{\Xi})$ and $E=\hat{X} X^{-1}$.
Consider the observer defined in \eqref{eq:ObserXi} and choose the
innovation term $\Delta := (\Delta_A, (\Delta_Q, \Delta_a)^i)$ as follows:
\begin{align}
\Delta_Q^i & := -k_{Q_i}( e_{y_i} \times \mr{y}_i)^\times \label{eq:Delta_Q}\\
\Delta_a^i & := -k_{a_i} \frac{e_{z_i} - \mr{z}_i }{e_{z_i}} \label{eq:Delta_a} \\
\Delta_A &:= -k_{A} \Ad_{\hat{A}} \left( \begin{matrix}
\Omega_\Delta^\times & V_\Delta \\ 0 & 0
\end{matrix} \right), \label{eq:Delta_A} \\
\begin{pmatrix} \Omega_{\Delta} \\ V_\Delta \end{pmatrix}
&:= \left( \sum_{i=1}^n \begin{pmatrix} \Pi_{\hat{y}_i} & \hat{z}_i \hat{y}_i^\times \\ -\hat{z}_i \hat{y}_i^\times & \hat{z}_i^2 \Pi_{\hat{y}_i} \end{pmatrix} \right)^{-1}
  \left( \sum_{i=1}^n \begin{pmatrix}
    -\hat{y}_i^\times \phi_i \\ -\hat{z}_i \phi_i
  \end{pmatrix} \right) - \begin{pmatrix}
    \Omega_U \\ V_U
  \end{pmatrix}, \notag 
\end{align}
where the gains $k_{a_i}, k_{Q_i}$ and $k_{A}$ are positive scalars (for $i = 1, \ldots, n$), and the matrix inverse in the definition of $\Delta_{A}$ is assumed to be well-defined.
Then the configuration estimate $\hat{\Xi}(t) = \Upsilon(\hat{X}(t),\mr{\Xi})$ converges almost globally asymptotically and locally exponentially to the true state $\Xi(t)=\Upsilon(X(t),\mr{\Xi})$ up to a possibly time-varying element in $\SE(3)$.
\end{theorem}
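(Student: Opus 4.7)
The plan is to follow the equivariant observer design methodology of \citep{RM_2013_Mahony_nolcos}: introduce a right-invariant group error $E = \hat{X} X^{-1}$, derive autonomous error kinematics on $\VSLAM_n(3)$, build a Lyapunov function from the equivariant output error, and translate conclusions back to the configuration estimate via the equivariance of $h$ from Lemma~\ref{lem:rho}.

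\textbf{Error kinematics.} First I would differentiate $E = \hat{X} X^{-1}$ using \eqref{eq:ddtX} and \eqref{eq:ObserXi}: the two lift terms share the argument $\Upsilon(X,\mr\Xi)$ and cancel, leaving the clean cascade
\begin{equation*}
\dot E = -\Delta E.
\end{equation*}
The product structure $\VSLAM_n(3) = \SE(3)\times(\SO(3)\times\MR)^n$ decouples this into $\dot E_A = -\Delta_A E_A$, $\dot E_Q^i = -\Delta_Q^i E_Q^i$ and $\dot E_a^i = -\Delta_a^i E_a^i$. Combining \eqref{e} with the explicit form of $\rho$ in \eqref{eq:rho} yields $e_{y_i} = E_Q^i \mr y_i$ and $e_{z_i} = (E_a^i)^{-1}\mr z_i$, and a direct inspection of the innovation confirms that every component of $\Delta$ depends only on $\hat{X}$, the output error $\mathbf{e}$, and the measured $U$, $\phi_i$, so the error system is self-contained.

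\textbf{Landmark convergence.} Substituting $\Delta_Q^i = -k_{Q_i}(e_{y_i}\times\mr y_i)^\times$ into $\dot e_{y_i} = -\Delta_Q^i e_{y_i}$ and expanding the triple product gives $\dot e_{y_i} = k_{Q_i}\Pi_{e_{y_i}}\mr y_i$; similarly $\Delta_a^i = -k_{a_i}(e_{z_i}-\mr z_i)/e_{z_i}$ produces $\dot e_{z_i} = -k_{a_i}(e_{z_i}-\mr z_i)$. I would take the candidate
\begin{equation*}
\Lyap = \sum_{i=1}^{n}\Big[(1 - e_{y_i}^\top \mr y_i) + \tfrac{1}{2}(e_{z_i}-\mr z_i)^2\Big],
\end{equation*}
and use $\mr y_i^\top \Pi_{e_{y_i}} \mr y_i = \|\Pi_{e_{y_i}}\mr y_i\|^2$ to obtain
\begin{equation*}
\dot\Lyap = -\sum_{i=1}^{n}\Big[k_{Q_i}\|\Pi_{e_{y_i}}\mr y_i\|^2 + k_{a_i}(e_{z_i}-\mr z_i)^2\Big]\le 0.
\end{equation*}
LaSalle then forces $(e_{y_i},e_{z_i})\to(\pm\mr y_i,\mr z_i)$; linearising at the antipodal critical points $\{e_{y_i}=-\mr y_i\}$ produces a positive eigenvalue in $T_{-\mr y_i}\Sph^2$, so those equilibria are unstable and the desired one has an open and dense basin of attraction. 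Linearising the same kinematics at $(\mr y_i,\mr z_i)$ itself gives $\dot{\tilde e}_{y_i}\approx -k_{Q_i}\tilde e_{y_i}$ on $T_{\mr y_i}\Sph^2$ and $\dot{\tilde e}_{z_i}=-k_{a_i}\tilde e_{z_i}$, supplying local exponential rates.

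\textbf{From output convergence to the claim.} Writing $\hat\Xi = \Upsilon(\hat{X},\mr\Xi) = \Upsilon(E,\Xi)$ (via $\hat X = E X$ and the right-action property), the discrepancy between $\hat\Xi$ and $\Xi$ is controlled entirely by $E$. Because $h$ only sees body-frame bearings and inverse depths, the stabiliser under $\rho$ of any element of $\calN_n(3)$ contains the whole $\SE(3)$ factor of $\VSLAM_n(3)$ together with the one-dimensional bearing isotropy of each landmark. Consequently $\mathbf{e}\to h(\mr\Xi)$ is equivalent to $E$ entering this stabiliser, which is precisely convergence of $\hat\Xi$ to $\Xi$ up to a (possibly time-varying) element of $\SE(3)$. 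The role of $\Delta_A$ is not to drive $E_A$ to the identity --- forbidden by the gauge --- but to keep the $\SE(3)$ component of the estimate consistent with the measured optic flow through the least-squares velocity reconstruction, the stated invertibility being the observability-type hypothesis that makes this reconstruction well-posed. The main obstacle I anticipate is exactly this last point: arguing rigorously that the cascade $\dot E_A = -\Delta_A E_A$ remains bounded during the transient, so that the global output-convergence argument passes unchanged to the claimed gauge-quotient convergence of $\hat\Xi$.
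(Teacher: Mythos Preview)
Your proposal is correct and follows essentially the same route as the paper: derive $\dot E=-\Delta E$, pass to the decoupled output-error kinematics $(\dot e_{y_i},\dot e_{z_i})=(-\Delta_Q^i e_{y_i},\Delta_a^i e_{z_i})$, run a Lyapunov argument on $\calN_n(3)$, and then interpret the result on the quotient via the equivariance of $h$. Your bearing cost $1-e_{y_i}^\top\mr y_i$ is literally the paper's $\tfrac12|e_{y_i}-\mr y_i|^2$ for unit vectors, and your $\|\Pi_{e_{y_i}}\mr y_i\|^2$ equals the paper's $|e_{y_i}\times\mr y_i|^2$, so the two Lyapunov computations coincide; the only cosmetic difference is that the paper obtains local exponential stability by an explicit $\epsilon$-bound on $\dot{\mathcal L}_{y_i}$ rather than by linearisation.
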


\begin{proof}
Let $X(t) \in \VSLAM_n(3)$ satisfy the lifted kinematics \eqref{eq:ddtX} with $\Upsilon(X(0),\mr{\Xi}) = \Xi(0)$.
It follows that $\Xi(t)=\Upsilon(X(t),\mr{\Xi})$ \citep{RM_2013_Mahony_nolcos}.
Define
\begin{equation}
E=\hat{X} X^{-1} =: (\tilde{A}, (\tilde{Q},\tilde{a})_i) \in \VSLAM_n(3),
\label{eq:E}
\end{equation}
with $\tilde{A}:=\hat{A}A^{-1}$,  $(\tilde{Q},\tilde{a})_i=\left( \hat{Q} Q^\top , \hat{a} a^{-1} \right)_i$.
Using \eqref{eq:ddtX} and \eqref{eq:ObserXi}, it is straightforward to verify that
\begin{equation}
\dot E = (-\Delta_A \tilde{A}, (-\Delta_Q \tilde{Q}, -\Delta_a \tilde{a})_i).
\label{dotE}
\end{equation}

Using the fact that $E^{-1} = (\tilde{A}^{-1},(\tilde{Q}^\top, \tilde{a}^{-1})_i)$  then each element of equation \eqref{e} becomes
\begin{align}
(e_{y_i}, e_{z_i})&=\left (\tilde{Q} \mr{y}_i,\tilde{a}^{-1} \mr{z}_i \right)\label{eq:e}.
\end{align}
Based on \eqref{dotE}, the error kinematics satisfy
\begin{align} \label{eq:error_kinematics}
(\dot{e}_{y_i}, \dot{e}_{z_i})&= \left (-\Delta_Q^i e_{y_i}, \Delta_a^i e_{z_i}\right ).
\end{align}

We first prove almost-global asymptotic and local exponential stability of the equilibrium $(e_{y_i}, e_{z_i})=(\mr{y}_i, \mr{z}_i)$ for the error kinematics \eqref{eq:error_kinematics}.
Consider the following candidate (positive definite) Lyapunov function $\Lyap: \calN_n(3) \to \R^+$,
\begin{equation}
{\cal L}=\frac{1}{2}\sum_{i=1}^{n} \left (\left|e_{y_i}-\mr{y}_i \right|^2+\left (e_{z_i}-\mr{z}_i\right)^2 \right). 
\label{eq:lyap}
\end{equation}
Differentiating  ${\cal L}$ and using \eqref{eq:Delta_Q} and \eqref{eq:Delta_a}, one gets:
\begin{align*}
\dot{\cal L} &=\sum_{i=1}^{n} \left (\left (e_{y_i}-\mr{y}_i \right )^\top \dot{e}_{y_i}+\left (e_{z_i}-\mr{z}_i\right)\dot{e}_{z_i} \right ) , \\
&= \sum_{i=1}^{n} \left( - \left( e_{y_i}-\mr{y}_i \right)^\top \Delta_Q^i e_{y_i} + \left( e_{z_i}-\mr{z}_i \right) \Delta_a^i e_{z_i} \right ) , \\
&=-\sum_{i=1}^{n} \left ( k_{y_i} \left | e_{y_i} \times \mr{y}_i \right |^2 + k_{z_i} \left (e_{z_i}-\mr{z}_i\right)^2 \right ).
\end{align*}
The time derivative of the Lyapunov function is negative definite and equal to zero when $e_{y_i}=\pm\mr{y}_i$ and $e_{z_i}=\mr{z}_i$.
Direct application of Lyapunov's theorem ensures that the equilibrium $(e_{y_i},e_{z_i})=(\mr{y}_i, \mr{z}_i)$ is almost-globally \emph{asymptotically} stable\footnote{It is straightforward to verify that the equilibrium point $e_{y_i}=-\mr{y}_i$ is unstable.}.

To prove local exponential stability of the observer it suffices to split the Lyapunov function into two parts
\[
{\cal L}= {\cal L}_y +{\cal L}_z
\]
${\cal L}_z =\frac{1}{2}\sum_{i=1}^{n} \left (e_{z_i}-\mr{z}_i\right)^2$
and verify that $\dot{\cal L}_z \leq -2 \min(k_{z_i}) {\cal L}_z$, with ${\cal L}_z$ converging exponentially to zero.
Consider ${\cal L}_y=\sum_{i=1}^{n} {\cal L}_{y_i}$ with ${\cal L}_{y_i}=\frac{1}{2} \left|e_{y_i}-\mr{y}_i \right|^2$.
If there exists a positive number $\epsilon$ such that ${\cal L}_{y_i} \leq 2 - \epsilon$, that is $e_{y_i}(0)$ is not in the opposite direction of $\mr{y}(0)$, for all $i=\{1,\ldots,n\}$, then
\[
\dot{\cal L}_{y_i} \leq -2 \min(k_{y_i}) \epsilon {\cal L}_{y_i}
\]
This demonstrates local exponential stability (in a large domain) of the equilibrium
$(e_{y_i},e_{z_i})=(\mr{y}_i, \mr{z}_i)$.

In the limit, at the stable equilibrium point $(e_{y_i},e_{z_i})=(\mr{y}_i, \mr{z}_i)$), \eqref{eq:e} implies that
\begin{align*}
(\hat{y},\hat{z})_i =
\left(Q_{\hat{A}}^\top \mr{y},\hat{a} \mr{z} \right)_i =
\left(Q_{\hat{A}}^\top e_{y},\hat{a} e_{z} \right)_i =
(y, z)_i,
\end{align*}
for all $i=1,\ldots,n$.
This in turn implies
\begin{align*}
\rho(\hat{X}, (\mr{y},\mr{z})_i) & = h(\Upsilon(\hat{X}, \mr{\Xi})) = h ( \Upsilon(X, \mr{\Xi})) \\
& = \rho(X, (\mr{y}, \mr{z})_i)
\end{align*}
Regarding just the central equality, and noting that $h$ only preserves the relative pose on $\mr{\totT}_n(3)$, it follows that
\[
\Xi = \Upsilon(X, \mr{\Xi}) = \Upsilon(\hat{X}, \mr{\Xi}) \cong \hat{\Xi}
\]
The symbol $\cong$ indicates that $\Xi = \hat{\Xi}$ up to the possibly time-varying gauge transformation $\tilde{A}$.
That is
$$(\hat{P},\hat{p}_i) = (\tilde{A}^{-1} P,  \tilde{A}^{-1} (p_i)) = (\tilde{A}^{-1} P,  R_{\tilde{A}}^\top(p_i - x_{\tilde{A}})).$$
This concludes the proof of the almost-global asymptotic and local exponential stability.

\end{proof}

\begin{remark}
Observe that the output error ${\bf e}$ is independent of the $\SE(3)$ innovation $\Delta_A$ and the primary stability analysis in Theorem \ref{th:observer} is undertaken on the output space, not the state-space.
This is a key property of the $\VSLAM_n(3)$ symmetry and is intrinsic to the invariance of the underlying SLAM problem discussed in Section \S \ref{sec:problem_formulation}.
The particular choice of innovation $\Delta_A$ in \eqref{eq:Delta_A} minimizes the least squares drift in the visual odometry error as observed from direct measurements of landmark coordinates and optical flow.
This is only one of a family of possible choices (for example, \cite{2017_Mahony_cdc}), however, further analysis of this question is beyond the scope of the present paper.
\end{remark}

\section{Simulation Results}

The first simulation experiment was conducted to verify the observer design in Theorem \ref{th:observer}.
A robot is simulated to move in a circle with velocity $V_U = (0.1,0,0)$ m/s, $\Omega_U = (0,0,0.02\pi)$ rad/s on the ground, with 10 landmarks uniformly distributed in a 0.5-1 m band around the robot's path.
The reference configuration $\mr{\xi}$ of the observer is randomly set, and the observer $\hat{X}$ is initialised to the identity group element.
All landmarks are assumed to be measured at all times, and no noise is added to the system.
The gains of the observer are set to $k_{Q_i} = 0.05$, $k_{a_i} = 0.02$, $k_{A} = 0.03$ for all $i=1,...,10$.
The observer equations are implemented with Euler integration using a time step of 0.5 s.
Figure \ref{fig:simple_sim} shows the evolution of the Lyapunov function \eqref{eq:lyap} components for each landmark over 100 s.
The bearing storage refers to the component $l_y^i := \frac{1}{2}\vert e_{y_i} - \mr{y}_i \vert^2$ and the inverse depth storage refers to the component $l_z^i := \frac{1}{2}\vert e_{z_i} - \mr{z}_i \vert^2$ for each landmark index $i$.
The top two plots show the value of these functions for each landmark, and the bottom two plots show the log value for each landmark.
The plots clearly show the almost-global asymptotic and local exponential convergence of the observer's error system.

\begin{figure}[!htb]
  \centering
  \includegraphics[width=0.85\linewidth]{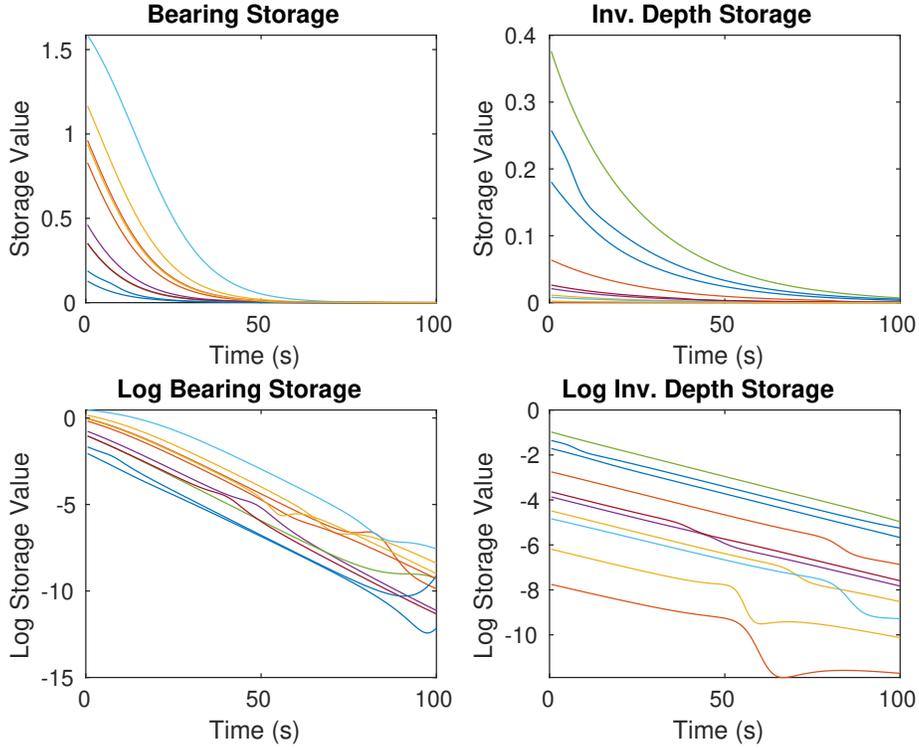}
  \caption{The evolution of the individual components of the Lyapunov function \eqref{eq:lyap} for 10 landmarks over time.}
  \label{fig:simple_sim}
\end{figure}

Additional simulations were carried out to compare the non-linear observer proposed in Theorem \ref{th:observer} with an Extended Kalman Filter (EKF).
A robot is simulated to move in a circle with velocity $V_U = (0.1,0,0)$ m/s, $\Omega_U = (0,0,0.02\pi)$ rad/s on the ground, with $n$ landmarks uniformly distributed in a 0.5-1 m band around the robot's path.
The robot is modelled to have a sensor range of 1 m.
The reference configuration $\mr{\xi}$ is initialised without any landmarks, and the observer group element is initialised to identity.
When landmarks are first seen, their inertial frame position is computed using the observer's current position estimate, and the reference configuration is augmented with this value.
When landmarks are not within the sensing range, the observer equations cannot be used, and the current observer estimate of the landmark position is fixed until the landmark is next seen.
All noise added to the input velocities and output measurements is drawn from zero-mean Gaussian distributions.
The linear velocity noise has variance $0.2$, the angular velocity noise has variance $0.1$, the optical flow noise has variance $0.02$, the bearing measurement noise has variance $0.01$, and the inverse depth measurement noise has variance $0.4$.
The EKF is implemented with the system equation $\eqref{eq:tot_kinematics}$, and the measurement equations $\eqref{eq:h_output}$.
The gains of the observer are set to $k_{Q_i} = 0.25$, $k_{a_i} = 0.1$, $k_{A} = 0.1$ for all $i=1,...,n$, and the observer equations were implemented using Euler integration with a time step of $0.5$ s.

\begin{figure}[!htb]
  \centering
  \begin{subfigure}[t]{0.45\linewidth}
    \centering
    \includegraphics[width=\textwidth]{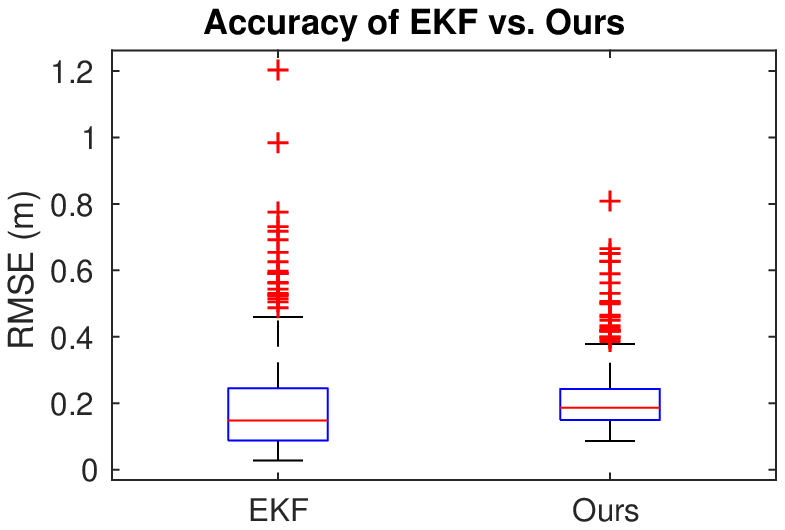}
    \caption{Boxplot of the RMSE of EKF and our observer on 50 landmarks over 500 trials.}
    \label{fig:sim_rmse_comparison}
  \end{subfigure}
  \hspace{0.05\linewidth}
  \begin{subfigure}[t]{0.45\linewidth}
    \centering
    \includegraphics[width=\textwidth]{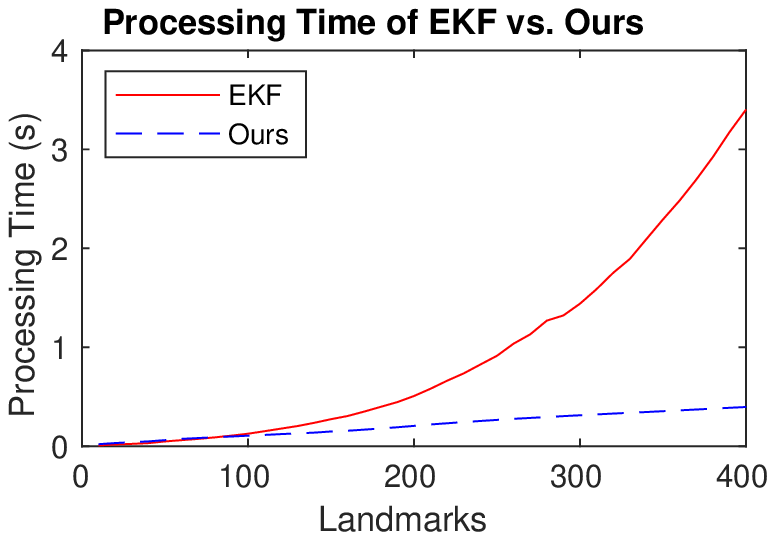}
    \caption{Mean computation time of EKF and our observer on 10-400 landmarks over 500 trials.}
    \label{fig:sim_time_comparison}
  \end{subfigure}
  \caption{Results of the simulation experiments comparing an EKF with the proposed observer.}
\end{figure}

Figure \ref{fig:sim_rmse_comparison} compares the statistics of the RMSE of the EKF and our observer for $n=50$ landmarks after 100 s over 500 trials.
While the EKF has a slightly lower mean RMSE, there are also more outliers due to linearisations errors.
Figure \ref{fig:sim_time_comparison} shows the mean computation time of the EKF and our observer for an increasing value of $n$ between 10 and 400 landmarks over 500 trials per number of landmarks.
While the processing time depends on the implementation of the EKF and of our observer, the figure clearly illustrates the quadratic complexity of the EKF and the linear complexity of our observer.

\section{Conclusion}
\label{sec:conclusion}

This paper proposes a new symmetry for visual SLAM and VIO problems.
This geometry is exploited to develop a visual SLAM observer and provide an almost global asymptotic and local exponential stability proof.
The authors believe that the inherent simplicity and robustness of the proposed approach makes it useful as a tool for embedded robotics applications.

\section*{Acknowledgement}

This research was supported by the Australian Research Council
through the ``Australian Centre of Excellence for Robotic Vision'' CE140100016.

\bibliographystyle{plainnat}
\bibliography{references}

\end{document}